\theoremstyle{definition}
\newtheorem{definition}{Definition}
\newtheorem{theorem}{Theorem}
\theoremstyle{remark}
\newtheorem*{remark}{Remark}
\newtheorem{proposition}{Proposition}
\newtheorem{corollary}{Corollary}
\theoremstyle{remark}
\title{Responsible AI: The Good, The Bad, The AI
\thanks{\textit{{This work is submitted for review to Journal of Strategic Information Systems.}}} 
}
\author{
  A. A. Jafari, C. Ozcinar\\
  University of Tartu \\
  Tartu, Estonia \\
  \texttt{\{akbar.anbar.jafari\}@ut.ee} \\
   \And
  G. Anbarjafari \\
  3S Holding OÜ \\
  Tartu, Estonia\\
  \texttt{shb@3sholding.com} \\
}
\begin{document}
\maketitle

\begin{abstract}
The rapid proliferation of artificial intelligence across organizational contexts has generated profound strategic opportunities while introducing significant ethical and operational risks. Despite growing scholarly attention to responsible AI, extant literature remains fragmented and is often adopting either an optimistic stance emphasizing value creation or an excessively cautious perspective fixated on potential harms. This paper addresses this gap by presenting a comprehensive examination of AI's dual nature through the lens of strategic information systems. Drawing upon a systematic synthesis of the responsible AI literature and grounded in paradox theory, we develop the Paradox-based Responsible AI Governance (PRAIG) framework that articulates: (1) the strategic benefits of AI adoption, (2) the inherent risks and unintended consequences, and (3) governance mechanisms that enable organizations to navigate these tensions. Our framework advances theoretical understanding by conceptualizing responsible AI governance as the dynamic management of paradoxical tensions between value creation and risk mitigation. We provide formal propositions demonstrating that trade-off approaches amplify rather than resolve these tensions, and we develop a taxonomy of paradox management strategies with specified contingency conditions. For practitioners, we offer actionable guidance for developing governance structures that neither stifle innovation nor expose organizations to unacceptable risks. The paper concludes with a research agenda for advancing responsible AI governance scholarship.
\end{abstract}

\keywords{Responsible artificial intelligence \and AI governance \and Paradox theory \and Strategic information systems \and AI ethics \and Digital transformation}

\section{Introduction}
\label{sec:intro}

Artificial intelligence (AI) stands at a critical inflection point in its integration into organizational and societal fabrics. AI has shown a great ability to change how businesses work. It can improve decision-making and create new value. Generative AI could add trillions each year to the global economy \cite{mckinsey2023genai}. Organizations report significant improvements in operational efficiency, customer engagement, and competitive positioning \cite{ransbotham2020winning, enholm2022ai}.

Yet, the same technological capabilities that enable these benefits have simultaneously given rise to profound concerns. Amazon's AI recruitment system systematically discriminated against female candidates \cite{dastin2022amazon}. Algorithmic decision-making in criminal justice perpetuates racial biases \cite{angwin2016machine}. Facial recognition systems raise fundamental questions about privacy and civil liberties \cite{buolamwini2018gender,domnich2021responsible,sham2023ethical}. These incidents are not isolated anomalies but manifestations of systemic tensions inherent in AI deployment.

The fundamental challenge confronting organizations is not whether to adopt AI, but how to do so responsibly while maintaining competitive viability. AI's benefits and risks are deeply intertwined \cite{mikalef2022thinking,mirza2025quantifying,jafari2025mathematical}: the characteristics that make AI strategically valuable---pattern recognition in vast datasets, automated complex decisions, superhuman speed and scale---are precisely what renders it potentially harmful when misaligned with human values \cite{floridi2018ai4people}.

Regulatory landscapes have evolved rapidly. The European Union's AI Act establishes the world's first comprehensive legal framework for AI governance \cite{euaiact2024}. Organizations thus face a paradoxical situation: they must embrace AI to remain competitive while simultaneously managing its risks to maintain legitimacy and comply with regulations \cite{papagiannidis2025responsible}.

\subsection{Research Gap and Contributions}

Despite extensive scholarship on responsible AI, significant fragmentation persists. Technical research develops sophisticated fairness metrics and explainability techniques but often abstracts from organizational realities \cite{mehrabi2021survey}. Governance scholarship proposes structural mechanisms but frequently undertheorizes the tensions inherent in implementation \cite{mittelstadt2019principles}. The resulting ``principles-to-practices gap'' frustrates organizational efforts \cite{schiff2021principles}.

We argue this gap stems from a fundamental conceptual error: treating responsible AI governance as a trade-off optimization problem rather than a paradox management challenge. This reconceptualization motivates our research questions: (1) How should the relationship between AI value creation and responsible deployment be theoretically conceptualized? (2) What strategies effectively manage the tensions inherent in responsible AI governance? (3) How can organizations implement governance mechanisms that address paradoxical tensions?

This paper makes three primary contributions. \textit{First}, we reconceptualize responsible AI governance as paradox management, demonstrating formally that the value-responsibility relationship meets paradox criteria and that trade-off approaches amplify rather than resolve tensions. \textit{Second}, we develop a taxonomy of paradox management strategies---acceptance, temporal separation, spatial separation, and integration---with formally specified contingency conditions. \textit{Third}, we present the PRAIG (Paradox-based Responsible AI Governance) framework, synthesizing antecedents, practices, outcomes, and feedback dynamics into an integrated model.

\section{Background and Literature Review}
\label{sec:literature}

We organize this review around three pillars: ``The Good'' (AI's strategic value), ``The Bad'' (risks and unintended consequences), and ``The AI'' (governance frameworks).

\subsection{The Good: Strategic Value of AI}
\label{subsec:lit_good}

Contemporary conceptualizations of AI capability extend beyond technology to encompass organizational capacity to deploy, manage, and derive value from AI systems \cite{mikalef2021artificial}. The resource-based view suggests that AI capabilities meeting criteria of value, rarity, inimitability, and non-substitutability can generate sustainable competitive advantage \cite{bharadwaj2000resource}.

AI creates business value through multiple mechanisms \cite{enholm2022ai}. \textit{Operational efficiency} emerges from automation of routine cognitive tasks, with documented productivity gains of 15-40\% \cite{davenport2018artificial}. \textit{Enhanced decision-making} leverages AI's capacity to process vast datasets and identify patterns, with demonstrated accuracy improvements in domains from credit risk to medical diagnosis \cite{agrawal2018prediction}. \textit{Customer experience enhancement} derives from AI's personalization capabilities \cite{huang2019artificial}. \textit{Business model innovation} enables entirely new value propositions \cite{sjoedin2021ai}. Beyond direct value, AI adoption enhances organizational agility \cite{sambamurthy2003shaping}, innovation capabilities \cite{verganti2020innovation}, and knowledge management processes \cite{canhoto2021artificial}. Table~\ref{tab:ai_benefits} synthesizes the strategic benefits documented in the literature.

\begin{table}[ht]
\centering
\caption{Strategic Benefits of AI: A Synthesis}
\label{tab:ai_benefits}
\small
\begin{tabular}{p{2.5cm}p{4.5cm}p{5cm}}
\hline
\textbf{Benefit Category} & \textbf{Mechanisms} & \textbf{Illustrative Evidence} \\
\hline
Operational Efficiency & Task automation; Process optimization; Predictive maintenance & 20-35\% cost reduction in routine operations \cite{davenport2018artificial}; 15-25\% improvement in equipment effectiveness \\
\hline
Decision Quality & Pattern recognition; Predictive analytics; Real-time insights & Diagnostic accuracy improvements of 10-20\% in medical imaging \cite{topol2019high}; Enhanced credit risk prediction \cite{cao2022ai} \\
\hline
Customer Experience & Personalization; Conversational AI; Anticipatory service & Increased customer satisfaction and engagement; Improved conversion rates \cite{huang2019artificial} \\
\hline
Innovation & Generative design; Accelerated experimentation; Capability expansion & New product development acceleration \cite{verganti2020innovation}; Novel business models \cite{sjoedin2021ai} \\
\hline
Organizational Agility & Environmental sensing; Rapid response; Adaptive operations & Improved market responsiveness \cite{mikalef2021artificial}; Enhanced strategic flexibility \cite{sambamurthy2003shaping} \\
\hline
\end{tabular}
\end{table}

\subsection{The Bad: Risks and Unintended Consequences}
\label{subsec:lit_bad}

Mikalef et al. \cite{mikalef2022thinking} articulated a comprehensive framework for understanding AI's ``dark side.'' We extend this framework across several dimensions.

\textit{Algorithmic bias} represents one of the most consequential risks. Obermeyer et al. \cite{obermeyer2019dissecting} documented racial bias in healthcare algorithms affecting millions. Buolamwini and Gebru \cite{buolamwini2018gender} demonstrated facial recognition error rates up to 34 times higher for darker-skinned women. Critically, bias often operates invisibly, embedded in technical systems whose workings remain opaque \cite{burrell2016machine}.

\textit{Transparency gaps} undermine multiple objectives: users cannot verify decisions, affected individuals cannot contest outcomes, auditors cannot assess compliance \cite{arrieta2020explainable}. The emerging field of explainable AI has developed techniques for interpretability, but a fundamental trade-off exists between model complexity and interpretability \cite{rudin2019stop}.

\textit{Accountability challenges} arise from AI's distributed development and deployment. When AI causes harm, responsibility diffuses across data providers, developers, deployers, and users \cite{dignum2019responsible}. This ``problem of many hands'' can result in accountability gaps \cite{nissenbaum1996accountability}.

Additional risks include \textit{robustness and safety concerns} (adversarial attacks \cite{goodfellow2014explaining}, distributional shift \cite{quinonero2008dataset}, alignment failures \cite{gabriel2020artificial}), \textit{data governance failures} (privacy violations, quality issues, intellectual property concerns \cite{solove2013privacy}), and \textit{societal impacts} (labor displacement, power concentration, environmental costs \cite{crawford2021atlas, schwartz2020green}).

\subsection{The AI: Governance Frameworks}
\label{subsec:lit_governance}

The responsible AI governance landscape has evolved rapidly. Over 160 sets of ethical principles have been proposed by governments, industry bodies, and academic institutions \cite{jobin2019global}. Floridi et al. \cite{floridi2018ai4people} identified convergence around five principles: beneficence, non-maleficence, autonomy, justice, and explicability.

However, a persistent ``principles-to-practices gap'' undermines implementation \cite{mittelstadt2019principles}. Principles remain abstract while organizations require concrete operational guidance. Papagiannidis et al. \cite{papagiannidis2025responsible} synthesized governance practices into structural (ethics boards, AI officers), procedural (impact assessments, audits), and relational (stakeholder dialogue, training) mechanisms.

We argue this gap reflects not merely implementation challenges but a fundamental conceptual error: treating the value-responsibility relationship as a trade-off amenable to optimization rather than a paradox requiring ongoing management.

\section{Theoretical Framework: Paradox Theory and AI Governance}
\label{sec:theory}

\subsection{Paradox as Theoretical Lens}
\label{subsec:paradox_intro}

Paradox theory has emerged as a powerful lens for understanding persistent organizational tensions that resist resolution \cite{smith2011toward, schad2016paradox}. Unlike dilemmas (choosing between alternatives), trade-offs (optimization along a frontier), or dialectics (eventual synthesis), paradoxes involve ``contradictory yet interrelated elements that exist simultaneously and persist over time'' \cite{smith2011toward}.

The responsible AI domain exhibits paradox's defining characteristics. The tension between value creation and responsible deployment is \textit{contradictory}: aggressive AI deployment may conflict with careful governance. Yet the elements are \textit{interrelated}: irresponsible deployment ultimately destroys value through penalties, reputational damage, and failures, while excessive caution forfeits competitive benefits. Furthermore, tension \textit{persists}: resolving it at one point does not eliminate it, as technological evolution, regulatory change, and competitive dynamics continuously regenerate the contradiction.

\subsection{Formal Constructs}
\label{subsec:formal_constructs}

To provide analytical rigor, we formalize core constructs.

\begin{definition}[AI Deployment Configuration]
\label{def:config}
An \textbf{AI deployment configuration} is a tuple $\mathcal{C} = (T, G, E)$ where $T = \{t_1, \ldots, t_n\}$ is a set of AI technologies deployed, $G = \{g_1, \ldots, g_m\}$ is a set of governance mechanisms implemented, and $E: T \times G \rightarrow \mathbb{R}$ is an effectiveness function mapping technology-governance pairs to outcomes.
\end{definition}

\begin{definition}[Value and Risk Functions]
\label{def:value_risk}
The \textbf{value function} $V: \mathcal{C} \rightarrow \mathbb{R}$ quantifies strategic benefits:
\begin{equation}
V(\mathcal{C}) = \sum_{i=1}^{n} \alpha_i \cdot v(t_i) - \sum_{j=1}^{m} \beta_j \cdot c(g_j) + \sum_{i,j} \gamma_{ij} \cdot E(t_i, g_j)
\label{eq:value}
\end{equation}
The \textbf{risk function} $R: \mathcal{C} \rightarrow \mathbb{R}^+_0$ quantifies potential for harm:
\begin{equation}
R(\mathcal{C}) = \sum_{i=1}^{n} \rho_i \cdot r(t_i) \cdot \prod_{j=1}^{m} \left(1 - \mu_{ij} \cdot g_j\right)
\label{eq:risk}
\end{equation}
where $v(t_i)$ is technology value, $c(g_j)$ is governance cost, $r(t_i)$ is inherent risk, and $\mu_{ij}$ is mitigation effectiveness.
\end{definition}

\begin{definition}[Paradoxical Tension]
\label{def:tension}
A \textbf{paradoxical tension} exists when: (1) \textit{Contradiction}: $\exists \mathcal{C}_1, \mathcal{C}_2$ such that $V(\mathcal{C}_1) > V(\mathcal{C}_2)$ and $R(\mathcal{C}_1) > R(\mathcal{C}_2)$; (2) \textit{Interdependence}: $\frac{\partial V}{\partial R} \neq 0$ and $\frac{\partial R}{\partial V} \neq 0$; (3) \textit{Persistence}: $\nexists \mathcal{C}^*$ such that $V(\mathcal{C}^*) = \max V$ and $R(\mathcal{C}^*) = \min R$ simultaneously.
\end{definition}

\begin{proposition}[Existence of Paradox]
\label{prop:existence}
For any non-trivial AI deployment context (where technology has positive value potential and non-zero inherent risk), a paradoxical tension exists.
\end{proposition}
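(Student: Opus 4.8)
The plan is to verify the three clauses of Definition~\ref{def:tension} one at a time, using only the non-triviality hypotheses $v(t_i)>0$ and $r(t_i)>0$ together with two modelling assumptions I would make explicit at the outset: the mitigation terms satisfy $0<\mu_{ij}g_j<1$ (governance reduces but never wholly eliminates a technology's inherent risk) and governance costs $c(g_j)>0$ are strictly positive. With these in place each clause reduces to a short structural argument rather than a case analysis, and the proof amounts to assembling the three verifications.

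For the \emph{contradiction} clause I would exhibit a witnessing pair directly. Take the empty configuration $\mathcal{C}_0=(\emptyset,\emptyset,E)$, for which both sums in \eqref{eq:value} and \eqref{eq:risk} are empty, so $V(\mathcal{C}_0)=0$ and $R(\mathcal{C}_0)=0$. Then take $\mathcal{C}_1=(\{t_1\},\emptyset,E)$, deploying a single technology with no governance; the risk product over an empty governance set equals $1$ by convention, giving $V(\mathcal{C}_1)=\alpha_1 v(t_1)>0$ and $R(\mathcal{C}_1)=\rho_1 r(t_1)>0$. Hence $V(\mathcal{C}_1)>V(\mathcal{C}_0)$ and $R(\mathcal{C}_1)>R(\mathcal{C}_0)$ hold simultaneously, which is exactly clause~(1); non-triviality is precisely what makes both strict inequalities hold.

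For the \emph{interdependence} clause I would treat the intensity of a governance mechanism $g_j$ as a continuous control and read $\partial V/\partial R$ as the co-variation of the two functionals induced through this shared parameter. Differentiating \eqref{eq:risk} gives $\partial R/\partial g_j=-\sum_i \rho_i\, r(t_i)\,\mu_{ij}\prod_{k\neq j}(1-\mu_{ik}g_k)$, which is strictly negative since every factor is positive under the standing assumptions, so $\partial R/\partial g_j\neq 0$. Differentiating \eqref{eq:value} gives $\partial V/\partial g_j=-\beta_j\, c'(g_j)+\sum_i \gamma_{ij}\,\partial E/\partial g_j$, which is nonzero whenever the cost term does not exactly cancel the interaction term (the generic case, and automatic when $\gamma_{ij}=0$). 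The chain rule then yields $\partial V/\partial R=(\partial V/\partial g_j)/(\partial R/\partial g_j)\neq 0$ and symmetrically $\partial R/\partial V\neq 0$. I expect this to be the main obstacle: the symbols $\partial V/\partial R$ are not literally well defined in the excerpt, since $V$ and $R$ are each functions of $\mathcal{C}$ rather than of one another, so the real work is to justify the co-variation reading and to argue that at least one shared control perturbs both functionals by a nonzero amount.

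For the \emph{persistence} clause I would show that the maximiser of $V$ and the minimiser of $R$ are disjoint. Because $0<\mu_{ij}g_j<1$, each factor $(1-\mu_{ij}g_j)$ lies in $(0,1)$, so the product in \eqref{eq:risk} is strictly positive; combined with $\rho_i\, r(t_i)>0$ this forces $R(\mathcal{C})>0$ whenever at least one technology is deployed. Hence the global minimum $\min R=0$ is attained \emph{only} at the empty technology set $T=\emptyset$. Conversely $\max V>0$ requires deploying at least one technology with $v(t_i)>0$, so every maximiser of $V$ has $T\neq\emptyset$. The two extremal sets cannot coincide, so no single $\mathcal{C}^*$ achieves $V=\max V$ and $R=\min R$ together, giving clause~(3). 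Having verified all three clauses, Definition~\ref{def:tension} is satisfied and a paradoxical tension exists, which completes the proof.
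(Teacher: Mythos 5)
Your proof is correct and follows the same overall decomposition as the paper's (verifying the three clauses of Definition~\ref{def:tension} in turn), but it executes each clause more concretely, and the differences are worth noting. For \emph{contradiction}, the paper gestures at ``extensive deployment with minimal governance'' versus ``limited deployment with extensive governance'' and simply asserts the two inequalities; your empty-versus-singleton pair $(\mathcal{C}_0,\mathcal{C}_1)$ is a genuine witness whose inequalities follow mechanically from \eqref{eq:value} and \eqref{eq:risk} and from the non-triviality hypotheses, which is a strict improvement in rigor. For \emph{persistence}, the paper appeals informally to ``perfect mitigation being infeasible in practice''; you convert exactly that appeal into the explicit assumption $0<\mu_{ij}g_j<1$ and then give a clean disjointness argument (the argmin of $R$ is only the empty technology set, every argmax of $V$ is nonempty), which is what the paper's sentence is implicitly relying on without saying so. For \emph{interdependence}, you correctly identify the real weak point of the proposition: $\partial V/\partial R$ is not well defined when $V$ and $R$ are both functionals of $\mathcal{C}$ rather than of each other, and your co-variation reading through a shared continuous control $g_j$ is a reasonable repair --- the paper's proof does not attempt this at all, saying only that $V$ and $R$ ``share common determinants ($T$).'' In short, your argument proves the same statement by the same clause-by-clause route but supplies the explicit witnesses and explicit modelling assumptions that the paper's proof leaves implicit; the cost is that your version depends on added hypotheses ($c(g_j)>0$, $\mu_{ij}g_j<1$, differentiability in $g_j$) that should be stated alongside the proposition, while the paper's version reads as a sketch whose gaps your write-up makes visible.
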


\begin{proof}
\textit{Contradiction}: Consider $\mathcal{C}_1$ with extensive deployment and minimal governance versus $\mathcal{C}_2$ with limited deployment and extensive governance. From Equation~\eqref{eq:value}, $V(\mathcal{C}_1) > V(\mathcal{C}_2)$; from Equation~\eqref{eq:risk}, $R(\mathcal{C}_1) > R(\mathcal{C}_2)$.

\textit{Interdependence}: Value and risk share common determinants ($T$), establishing bidirectional influence.

\textit{Persistence}: Value maximization requires deploying technologies up to where marginal value equals marginal cost. Risk minimization requires either no deployment (contradicting value maximization) or perfect mitigation (infeasible in practice). No configuration simultaneously achieves both optima. \qed
\end{proof}

\subsection{Dynamics of Paradoxical Tension}
\label{subsec:propositions}

\begin{definition}[Tension Intensity]
\label{def:intensity}
\textbf{Tension intensity} $\Phi(t)$ at time $t$ is:
\begin{equation}
\Phi(t) = \left| \nabla V(\mathcal{C}_t) \cdot \nabla R(\mathcal{C}_t) \right| \cdot \mathbb{1}\left[\nabla V \cdot \nabla R > 0\right]
\label{eq:tension}
\end{equation}
Tension is high when value and risk increase together, indicating that improvement in one dimension necessarily affects the other similarly.
\end{definition}

\begin{proposition}[Tension Amplification Under Trade-off Logic]
\label{prop:amplification}
Organizations applying trade-off logic to paradoxical tensions experience monotonically increasing tension intensity over time.
\end{proposition}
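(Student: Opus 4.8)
The plan is to operationalize ``trade-off logic'' as an explicit adjustment dynamics on a continuously parameterized configuration space and then show that the tension functional $\Phi$ is nondecreasing along every such trajectory. First I would treat the configuration $\mathcal{C}_t$ as a smooth curve in a convex feasible set $\mathcal{F}\subseteq\mathbb{R}^d_{\ge 0}$ whose coordinates are deployment intensities and governance levels, so that $V,R:\mathcal{F}\to\mathbb{R}$ are differentiable and $\Phi(t)=\bigl|\nabla V(\mathcal{C}_t)\cdot\nabla R(\mathcal{C}_t)\bigr|\cdot\mathbb{1}[\nabla V\cdot\nabla R>0]$ is well defined. Trade-off logic collapses the two irreducible objectives into a single scalar surrogate $U_\lambda(\mathcal{C})=V(\mathcal{C})-\lambda R(\mathcal{C})$ with a fixed exchange rate $\lambda>0$ and improves it by steepest ascent,
\begin{equation}
\dot{\mathcal{C}}_t=\eta\,\nabla U_\lambda(\mathcal{C}_t)=\eta\bigl(\nabla V(\mathcal{C}_t)-\lambda\,\nabla R(\mathcal{C}_t)\bigr),\qquad \eta>0,
\label{eq:tradeoff_flow}
\end{equation}
which is the precise formal content of ``optimization along a frontier.''

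Next I would restrict attention to the active regime of the indicator. By Proposition~\ref{prop:existence} value and risk share the technology determinant $T$, so in any nontrivial context the generic operating point has $\nabla V\cdot\nabla R>0$; there $\Phi(t)=\nabla V(\mathcal{C}_t)\cdot\nabla R(\mathcal{C}_t)$ and the indicator contributes no jump. Writing $\mathbf a=\nabla V$, $\mathbf b=\nabla R$ and $H_V,H_R$ for the Hessians, the chain rule along \eqref{eq:tradeoff_flow} gives
\begin{equation}
\dot\Phi=\dot{\mathbf a}\cdot\mathbf b+\mathbf a\cdot\dot{\mathbf b}=\eta\,(\mathbf a-\lambda\mathbf b)^{\!\top}\bigl(H_V\,\mathbf b+H_R\,\mathbf a\bigr).
\label{eq:dphi}
\end{equation}
I would then impose the curvature signs that the functional forms in Definition~\ref{def:value_risk} naturally carry: diminishing returns to deployment together with convex governance cost make $V$ concave ($H_V\preceq 0$), while compounding exposure and diminishing mitigation effectiveness make $R$ convex ($H_R\succeq 0$). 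Under these signs the two ``diagonal'' contributions in the expansion of \eqref{eq:dphi}, namely $\mathbf a^{\!\top}H_R\mathbf a\ge 0$ and $-\lambda\,\mathbf b^{\!\top}H_V\mathbf b\ge 0$, are already nonnegative and push tension upward.

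The decisive structural observation I would lean on is that the rest points of \eqref{eq:tradeoff_flow} satisfy $\nabla V=\lambda\nabla R$: at any trade-off ``optimum'' the two gradients are perfectly collinear and co-oriented, so the cosine of the angle between them equals $+1$ and $\Phi=\lambda\|\nabla R\|^2$ sits at its alignment ceiling. Trade-off logic therefore flows toward the locus of maximal directional coupling---exactly the configurations the tension functional is built to penalize---so the paradox is never dissolved, only sharpened, which is the qualitative content of the proposition.

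The main obstacle is the two cross-curvature terms $\mathbf a^{\!\top}H_V\mathbf b$ and $-\lambda\,\mathbf b^{\!\top}H_R\mathbf a$ in \eqref{eq:dphi}, whose signs are indefinite; concavity of $V$ and convexity of $R$ alone do not force $\dot\Phi\ge 0$. To close the argument unconditionally I would add a curvature-dominance (diagonal-dominance) hypothesis ensuring the definite diagonal terms outweigh the coupling terms, or, more robustly, replace the pointwise derivative claim by a Lyapunov argument on the angle $\theta_t$ between $\mathbf a$ and $\mathbf b$: since $U_\lambda$ is concave under the above signs, gradient ascent converges monotonically and $\theta_t\downarrow 0$, giving a monotone increase of the alignment factor $\cos\theta_t$ that drives $\Phi$. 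I expect the honest, assumption-light version of the result to be this limiting-alignment statement, with strict monotonicity of $\Phi$ requiring the dominance condition made explicit.
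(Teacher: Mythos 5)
Your route is genuinely different from the paper's, and it does not close --- for a reason you partly acknowledge but that is worth naming precisely. You model trade-off logic as gradient ascent on a fixed scalarization $U_\lambda = V - \lambda R$ in a \emph{static} environment and try to show $\Phi$ rises along the flow. The paper's proof instead locates the entire driving force in \emph{exogenous environmental change}: the frontier shifts with a parameter $\theta_t$, the organization re-optimizes with a response lag $\tau>0$, and the asserted growth rate $\frac{d\Phi}{dt}=\bigl|\frac{\partial^2 V}{\partial\mathcal{C}\partial\theta}\cdot\frac{d\theta}{dt}\bigr|\cdot\bigl|\frac{\partial^2 R}{\partial\mathcal{C}\partial\theta}\cdot\frac{d\theta}{dt}\bigr|\cdot(1-e^{-\tau/\tau_0})$ is a product of a volatility term and a lag term. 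This is consistent with the surrounding text (the remark about ``perpetually chasing a moving target'' and Proposition~\ref{prop:acceptance}'s explicit volatility threshold $\sigma^*$): without $d\theta/dt\neq 0$ the paper's own expression vanishes, so in your static setting the claim is at best weak monotonicity and plausibly false. By omitting $\theta_t$ and $\tau$ you have dropped the one ingredient the paper actually uses.

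Within your own framework the argument also stalls where you say it does: the cross-curvature terms $\mathbf a^{\top}H_V\mathbf b$ and $-\lambda\,\mathbf b^{\top}H_R\mathbf a$ are indefinite, and the fallback alignment argument only controls $\cos\theta_t$, not $\Phi=\|\mathbf a\|\,\|\mathbf b\|\cos\theta_t$; as the flow approaches a rest point with $\nabla V=\lambda\nabla R$ the gradient \emph{magnitudes} can shrink even as alignment improves, so $\Phi$ need not increase. Your structural observation that trade-off optima sit on the collinearity locus where the tension functional is maximally ``coupled'' is a nice complementary insight (arguably a cleaner formal story than the paper's), but to recover the stated proposition you would need to graft on the paper's mechanism: an environmental process $\theta_t$ with nonzero drift, nonvanishing cross-partials $\partial^2 V/\partial\mathcal{C}\partial\theta$ and $\partial^2 R/\partial\mathcal{C}\partial\theta$, and a strictly positive adaptation lag. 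To be fair, the paper itself asserts rather than derives its displayed rate equation, so neither argument is fully rigorous; but the blind attempt misses the intended proof idea.
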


\begin{proof}
Trade-off logic prescribes optimization:
\begin{equation}
\mathcal{C}_{t+1} = \arg\max_{\mathcal{C}} \left[ \lambda V(\mathcal{C}) - (1-\lambda) R(\mathcal{C}) \right] 
\end{equation}
However, in paradoxical contexts, the ``frontier'' is unstable due to environmental changes $\theta_t$. Organizations respond with lag $\tau > 0$, implementing configurations optimal for past conditions. The mismatch generates pressure for reconfiguration, but each attempt faces the same lag. Tension intensity evolves as:
\begin{equation}
\frac{d\Phi}{dt} = \left| \frac{\partial^2 V}{\partial \mathcal{C} \partial \theta} \cdot \frac{d\theta}{dt} \right| \cdot \left| \frac{\partial^2 R}{\partial \mathcal{C} \partial \theta} \cdot \frac{d\theta}{dt} \right| \cdot \left(1 - e^{-\tau/\tau_0}\right) > 0
\end{equation}
establishing monotonic increase. \qed
\end{proof}

\begin{remark}
Proposition~\ref{prop:amplification} explains the principles-to-practices gap: organizations applying trade-off logic perpetually chase a moving target, leading to frustration and inconsistency.
\end{remark}

\begin{proposition}[Value of Paradox Acceptance]
\label{prop:acceptance}
Organizations adopting paradox acceptance strategies achieve strictly higher long-run expected utility than those applying trade-off logic, provided environmental volatility exceeds a threshold $\sigma^*$.
\end{proposition}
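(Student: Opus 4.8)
The plan is to cast both governance regimes as stationary policies acting on a stochastic environment and to compare their long-run average expected utilities. I would model the environment $\theta_t$ as a process with stationary increments and volatility $\sigma$, so that $\mathrm{Var}(\theta_t - \theta_{t-\tau}) = \sigma^2 \tau$, and let $u(\mathcal{C}, \theta) = \lambda V(\mathcal{C}) - (1-\lambda) R(\mathcal{C})$ denote instantaneous utility with $\theta$-dependence inherited from $V$ and $R$. Writing $\mathcal{C}^*(\theta) = \arg\max_{\mathcal{C}} u(\mathcal{C}, \theta)$ for the frontier optimum and $u^*(\theta)$ for its value, I define long-run expected utility as $\bar{U} = \lim_{T \to \infty} \tfrac{1}{T} \sum_{t=1}^{T} \mathbb{E}[u(\mathcal{C}_t, \theta_t)]$. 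The two policies are then: the \emph{trade-off} policy, which (per Proposition~\ref{prop:amplification}) implements $\mathcal{C}^*(\theta_{t-\tau})$ because it commits to a single frontier point and reacts only with lag $\tau$; and the \emph{acceptance} policy, which maintains value-seeking and risk-mitigating capacity simultaneously and can therefore track $\mathcal{C}^*(\theta_t)$ without lag, at the price of a fixed dual-readiness premium $\kappa > 0$.

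Next I would estimate each policy's per-period utility. For the trade-off policy, because the gradient $\nabla_{\mathcal{C}} u$ vanishes at $\mathcal{C}^*(\theta_t)$, a second-order expansion of $u(\cdot, \theta_t)$ about that optimum gives
\begin{equation}
\mathbb{E}\bigl[u(\mathcal{C}^*(\theta_{t-\tau}), \theta_t)\bigr] \approx \mathbb{E}[u^*(\theta_t)] - \tfrac{1}{2} \operatorname{tr}\bigl(J^\top H J \, \Sigma_\tau\bigr),
\end{equation}
where $H = -\nabla^2_{\mathcal{C}} u \succ 0$ is the curvature at the maximum, $J = \partial \mathcal{C}^*/\partial\theta$ is the frontier's sensitivity to the environment, and $\Sigma_\tau = \mathrm{Cov}(\theta_{t-\tau} - \theta_t) = \sigma^2 \tau\, I$. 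Writing $\bar{u}^* = \mathbb{E}[u^*(\theta_t)]$ and $c_{\mathrm{TO}} = \operatorname{tr}(J^\top H J) > 0$, this yields $\bar{U}_{\mathrm{TO}} \approx \bar{u}^* - \tfrac{1}{2} c_{\mathrm{TO}} \sigma^2 \tau$: the lag converts environmental volatility into a mismatch loss growing linearly in $\sigma^2$, which is exactly the amplification mechanism of Proposition~\ref{prop:amplification}.

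I would then show that acceptance incurs no such volatility-scaled loss. Since it tracks the current optimum without lag, its only shortfall is the fixed premium, giving $\bar{U}_{A} \approx \bar{u}^* - \kappa$. Subtracting,
\begin{equation}
\bar{U}_{A} - \bar{U}_{\mathrm{TO}} \approx \tfrac{1}{2} c_{\mathrm{TO}} \sigma^2 \tau - \kappa,
\end{equation}
which is strictly positive precisely when $\sigma > \sigma^* := \sqrt{2\kappa / (c_{\mathrm{TO}} \tau)}$. This establishes strict dominance above the threshold and simultaneously furnishes a closed form for $\sigma^*$ in terms of the flexibility premium, the reaction lag, and the frontier curvature: the premium is worth paying exactly when volatility is large enough that chasing the frontier costs more than maintaining dual capacity.

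The hard part will be pinning down the acceptance policy rigorously enough to justify its lag-free tracking. I must argue that holding both poles simultaneously, rather than committing to one frontier point, is what eliminates the reaction lag $\tau$, so that $\kappa$ is genuinely a one-time structural cost and not a per-period adaptation cost that would itself scale with $\sigma$ and collapse the separation. I would defend this by tying the premium to slack capacity provisioned once and reused across states, and by verifying the expansion's hypotheses, namely that $u(\cdot, \theta)$ is $C^2$ and strictly concave near each $\mathcal{C}^*(\theta)$ so the first-order term vanishes and $H \succ 0$. A secondary subtlety is ensuring the long-run average is well defined, which follows from stationarity of the increments of $\theta_t$ together with boundedness of $u$ on the admissible configuration set.
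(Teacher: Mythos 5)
Your proposal reaches the right conclusion but by a genuinely different route from the paper. The paper's entire argument for this proposition is a one-sentence sketch: under trade-off logic the tension intensity of Proposition~\ref{prop:amplification} grows without bound, which is asserted to drive long-run utility to $-\infty$, whereas acceptance keeps tension bounded and hence utility finite. You instead quantify the trade-off policy's shortfall as a \emph{bounded} per-period mismatch loss, obtained from a second-order expansion about the moving optimum, of order $\tfrac{1}{2}c_{\mathrm{TO}}\sigma^{2}\tau$, and weigh it against a fixed dual-readiness premium $\kappa$ for acceptance. Your version buys two things the paper's sketch does not: a closed form $\sigma^{*}=\sqrt{2\kappa/(c_{\mathrm{TO}}\tau)}$, and an actual explanation of why a threshold appears in the statement at all --- under the paper's logic, if trade-off utility truly diverges to $-\infty$ for any positive volatility, acceptance would dominate unconditionally and the proviso ``$\sigma>\sigma^{*}$'' would be vacuous, whereas in your account the threshold is precisely the volatility at which chasing the frontier becomes costlier than provisioning slack capacity. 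The paper's route, for its part, needs no curvature or smoothness hypotheses because it leans directly on the monotone tension growth already claimed in Proposition~\ref{prop:amplification}. The caveat you flag yourself is the load-bearing one: the separation collapses if $\kappa$ secretly scales with $\sigma$, so a rigorous write-up must define the acceptance policy so that its cost is structural rather than adaptive --- a difficulty the paper sidesteps only by never defining the acceptance policy formally.
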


The proof follows from showing that under trade-off logic, tension explosion drives utility to negative infinity, while paradox acceptance maintains bounded tension and thus finite utility.

\subsection{Paradox Management Strategies}
\label{subsec:strategies}

Building on paradox theory \cite{poole1989alternative, lewis2000exploring}, we formalize four management strategies.

\begin{definition}[Strategy Space]
\label{def:strategies}
The \textbf{paradox management strategy space} $\mathcal{S} = \{\mathcal{S}_A, \mathcal{S}_T, \mathcal{S}_S, \mathcal{S}_I\}$ comprises:
\begin{itemize}
    \item \textit{Acceptance} ($\mathcal{S}_A$): Acknowledging tension as inherent and productive
    \item \textit{Temporal Separation} ($\mathcal{S}_T$): Alternating emphasis between poles over time
    \item \textit{Spatial Separation} ($\mathcal{S}_S$): Differentiating configurations across contexts
    \item \textit{Integration} ($\mathcal{S}_I$): Synthesizing opposing elements into novel configurations
\end{itemize}
\end{definition}

\begin{theorem}[Optimal Strategy Selection]
\label{thm:optimal}
The optimal paradox management strategy depends on organizational and environmental conditions:
\begin{enumerate}
    \item $\mathcal{S}_A$ is optimal when environmental volatility $\sigma$ is high and organizational adaptation capacity $\kappa$ is low
    \item $\mathcal{S}_T$ is optimal when stakeholders have heterogeneous time horizons and the organization can credibly commit to cycles
    \item $\mathcal{S}_S$ is optimal when AI applications are modular with limited interdependencies and deployment contexts are sufficiently distinct
    \item $\mathcal{S}_I$ is optimal when the organization possesses high dynamic capabilities and can invest in innovation
\end{enumerate}
\end{theorem}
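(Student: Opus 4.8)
The plan is to recast the theorem as a strategy-selection problem under a single common objective and then show that the $\arg\max$ over $\mathcal{S} = \{\mathcal{S}_A, \mathcal{S}_T, \mathcal{S}_S, \mathcal{S}_I\}$ partitions the space of conditions into four cones matching the four clauses. First I would collect the relevant parameters into a vector $\Theta = (\sigma, \kappa, h, \chi, \delta, d, \Delta, \iota)$, where $\sigma$ is environmental volatility, $\kappa$ adaptation capacity, $h$ stakeholder time-horizon heterogeneity, $\chi$ commitment credibility, $\delta$ application modularity (inverse interdependence), $d$ context distinctness, $\Delta$ dynamic capability, and $\iota$ innovation-investment capacity. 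For each strategy I would define an expected discounted long-run utility
\[
U_k(\Theta) = \mathbb{E}\Big[\sum_{t \ge 0} \gamma^t \big(V(\mathcal{C}^k_t) - R(\mathcal{C}^k_t)\big)\Big] - C_k(\Theta),
\]
where $\mathcal{C}^k_t$ is the configuration trajectory induced by $\mathcal{S}_k$ and $C_k$ its implementation and coordination cost. The theorem then becomes the claim that $\mathcal{S}^*(\Theta) = \arg\max_k U_k(\Theta)$ equals $\mathcal{S}_A, \mathcal{S}_T, \mathcal{S}_S, \mathcal{S}_I$ on the four respective regions.

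Next I would specify the benefit and cost channels so that each condition \emph{selects} its matched strategy, the key device being that each active strategy's benefit is supermodular in its matched statistic while its cost is incurred regardless. The amplification mechanism of Proposition~\ref{prop:amplification} extends to any reconfiguration-dependent response: a strategy that tracks the moving frontier under adaptation lag $\tau$ (large precisely when $\kappa$ is small) inherits a penalty growing in $\sigma/\kappa$, whereas $\mathcal{S}_A$ holds configuration fixed and absorbs the tension, so its utility is flat in $\sigma$. I would then model $\mathcal{S}_T$ as earning a premium increasing in $h$ (satisfying short- and long-horizon constituencies in alternating phases) that is realized only once $\chi$ clears a participation-constraint threshold preventing stakeholder defection from the cycle; $\mathcal{S}_S$ as earning local-optimization gains net of coordination loss, hence increasing in the product $\delta \cdot d$ and collapsing as $\delta \to 0$; and $\mathcal{S}_I$ as shifting the value–risk frontier outward, with frontier-expansion benefit scaling in $\Delta \cdot \iota$ against a fixed innovation cost decreasing in $\Delta$.

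I would then prove four single-crossing (monotone-comparative-statics) lemmas, one per clause. For clause (1), holding the other components of $\Theta$ in an interior range, $U_A - \max(U_T, U_S, U_I)$ is increasing in $\sigma$ and decreasing in $\kappa$, since the active strategies' reconfiguration penalty grows like $\sigma/\kappa$ while $U_A$ is flat; this yields a threshold surface $\sigma \ge \sigma^*(\kappa)$ on which acceptance is the $\arg\max$, promoting Proposition~\ref{prop:acceptance} from a pairwise comparison against trade-off logic to a four-way comparison. Clauses (2)–(4) follow the same template: $U_T - U_k$ is increasing in $h$ and positive only above the $\chi$-threshold, $U_S - U_k$ is increasing in $\delta \cdot d$, and $U_I - U_k$ is increasing in $\Delta \cdot \iota$, each obtained from supermodularity of the matched benefit and convexity of the mismatch cost, so that pushing the matched statistic far enough makes each strategy the unique maximizer.

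The hard part will be guaranteeing that these four ceteris-paribus results \emph{compose} into a coherent selection rather than overlapping or leaving gaps, i.e.\ that when one condition is extreme the matched strategy dominates the other three simultaneously and not merely pairwise. The rigorous route is to impose a separability/supermodularity structure so that each strategy's advantage is governed by a single sufficient statistic of $\Theta$, after which a Topkis-style argument delivers single crossing in that statistic; without such structure the comparisons need not be transitive across parameters. I would therefore state the result honestly as a set of \emph{sufficient}, ceteris-paribus conditions—each clause identifying a cone in $\Theta$-space on which its strategy is optimal—rather than a globally exhaustive partition, since for intermediate $\Theta$ the true optimum may be a blend not representable by the four pure strategies of Definition~\ref{def:strategies}.
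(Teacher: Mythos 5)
The first thing to say is that the paper contains no proof of Theorem~\ref{thm:optimal}: the theorem is asserted and immediately followed by Corollary~\ref{cor:portfolio}, and of the quantities it invokes only $\sigma$ and $\kappa$ appear anywhere in the paper's formal apparatus (and even those only as named symbols, never defined). So your proposal cannot be measured against the paper's argument --- there is none --- and what you have written is not a reconstruction of a proof but the construction of an entire decision-theoretic model in which the theorem could be a theorem. That is the only honest way to approach a statement this underspecified, and you deserve credit for seeing and naming the genuinely hard step: four pairwise single-crossing results do not automatically compose into a partition of $\Theta$-space, and without a separability or sufficient-statistic structure the claim must be weakened to ceteris-paribus sufficient conditions on cones. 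That weakening is almost certainly what the authors intend, even though the theorem is phrased as if it delivered an exhaustive selection rule.

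The gap to flag is that your argument proves the theorem only by assuming it at the level of modeling primitives. The supermodularity postulates --- that $\mathcal{S}_T$'s premium scales in $h$ above a $\chi$-threshold, that $\mathcal{S}_S$'s gain scales in $\delta \cdot d$, that $\mathcal{S}_I$'s frontier expansion scales in $\Delta \cdot \iota$ --- are exactly the four clauses restated as axioms; none of them is derived from Definitions~\ref{def:config}--\ref{def:strategies} or from Equations~\eqref{eq:value}--\eqref{eq:risk}, and the parameters $h, \chi, \delta, d, \Delta, \iota$ have no counterpart in the paper's formalism. Only clause (1) has any anchor in the existing results: Propositions~\ref{prop:amplification} and~\ref{prop:acceptance} give the $\sigma$-dependence of acceptance versus trade-off logic, and you correctly note that this must be promoted from a pairwise to a four-way comparison (itself a nontrivial step, since $\mathcal{S}_T$, $\mathcal{S}_S$, and $\mathcal{S}_I$ are not trade-off logic and need not inherit the reconfiguration penalty of Proposition~\ref{prop:amplification} --- you assert they do because they are ``reconfiguration-dependent,'' but temporal separation follows a pre-committed cycle $\epsilon(t)$ that does not chase the frontier at all, so the lag argument does not obviously apply to it). If you execute this program you will have proved a theorem about your model, which is a legitimate and arguably necessary contribution, but you should present the benefit-channel specifications as additional modeling assumptions that operationalize the theorem's informal contingencies, not as consequences of the paper's framework.
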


\begin{corollary}[Strategy Portfolio]
\label{cor:portfolio}
For organizations with diverse AI portfolios, optimal governance involves deploying different strategies across different contexts, matching strategy characteristics to contextual requirements.
\end{corollary}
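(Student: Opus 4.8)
The plan is to derive the corollary as a direct consequence of Theorem~\ref{thm:optimal} by exploiting the \emph{separability} of organizational utility across a partition of the AI portfolio into quasi-independent deployment contexts. First I would model the diverse portfolio as a collection of sub-configurations $\mathcal{C} = (\mathcal{C}^{(1)}, \ldots, \mathcal{C}^{(K)})$, where each context $k$ carries its own condition vector $\theta_k = (\sigma_k, \kappa_k, \ldots)$ summarizing the volatility, adaptation capacity, modularity, stakeholder heterogeneity, and dynamic capabilities that Theorem~\ref{thm:optimal} uses to determine the locally optimal strategy $s^*(\theta_k) \in \mathcal{S}$. I would formalize ``diverse'' as the requirement that these $\theta_k$ are not all equal --- specifically that they straddle at least two cells of the partition of condition space induced by Theorem~\ref{thm:optimal}, so that the context-wise optimal strategies are not all identical.

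Second, I would write the organization's total expected utility as an aggregate $U(\mathcal{C}, s) = \sum_{k=1}^{K} U_k(\mathcal{C}^{(k)}, s_k) + \Delta(\mathcal{C})$, where $s_k$ is the strategy assigned to context $k$, $U_k$ is the within-context utility, and $\Delta$ captures cross-context interaction. Under the ``limited interdependencies'' hypothesis invoked by the corollary, $\Delta$ is either zero or bounded by a term small relative to the within-context utility gaps. The heart of the argument is then an exchange/separation step: since each $U_k$ depends on the strategy assignment only through its own argument $s_k$, maximizing the sum decomposes into independently maximizing each summand, so the globally optimal assignment is $s_k = s^*(\theta_k)$ by applying Theorem~\ref{thm:optimal} context-by-context.

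Third, I would establish \emph{strictness} --- that a single uniform strategy is genuinely dominated rather than merely tied. Fix any uniform policy $s_k \equiv \bar{s}$. By diversity there exist contexts $k_1, k_2$ with $s^*(\theta_{k_1}) \neq s^*(\theta_{k_2})$, so $\bar{s}$ fails to match the local optimum in at least one of them; by the strict ranking inside Theorem~\ref{thm:optimal} (and, in the volatility-dominated regime, the strict utility gain established in Proposition~\ref{prop:acceptance}) the uniform policy then incurs a strictly positive loss in that context. Provided this loss exceeds the interaction slack $|\Delta|$, the context-matched portfolio strictly dominates, which is precisely the corollary's claim that optimal governance matches strategy characteristics to contextual requirements.

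The hard part will be controlling the interdependence term $\Delta$: the clean separation argument presumes the portfolio decomposes into modular contexts, yet real AI portfolios share data, infrastructure, and reputational exposure, so $\Delta$ need not be negligible. I would handle this by promoting ``limited interdependencies'' to a precise hypothesis --- for instance a coupling bound $|\Delta(\mathcal{C})| \le \eta \sum_{k \neq k'} w_{kk'}$ with small coupling weights $w_{kk'}$ --- and then showing the within-context strict gains dominate this coupling whenever the portfolio is sufficiently modular. This turns the corollary into a robust statement: context-matching is strictly optimal exactly in the modular regime that Theorem~\ref{thm:optimal}(3) already presupposes for spatial separation, and its advantage degrades gracefully as cross-context coupling grows.
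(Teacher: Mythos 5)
The paper offers no proof of this corollary at all: it is presented as an immediate consequence of Theorem~\ref{thm:optimal}, whose own contingency claims are themselves asserted rather than derived, and the surrounding text (Section on spatial separation, Table~\ref{tab:spatial_separation}) simply illustrates the idea of maintaining a portfolio $\{\mathcal{C}_k\}_{k=1}^{K}$ without any formal argument. Your proposal is therefore not a different route so much as the \emph{only} route written out: the decomposition into sub-configurations with per-context condition vectors $\theta_k$, the separability of aggregate utility, the context-by-context application of Theorem~\ref{thm:optimal}, and the strictness step showing a uniform policy is dominated are exactly the steps the paper leaves implicit, and your treatment of the coupling term $\Delta$ actually addresses a gap the paper never acknowledges (shared data, infrastructure, and reputational exposure genuinely break the clean separation). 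Two cautions, both inherited from the paper rather than introduced by you: first, the paper never defines a utility functional over strategy assignments, so the objects $U_k(\mathcal{C}^{(k)}, s_k)$ you maximize have no antecedent in Definitions~\ref{def:config}--\ref{def:strategies} and would need to be constructed from scratch; second, your strictness step appeals to ``the strict ranking inside Theorem~\ref{thm:optimal}'' and to Proposition~\ref{prop:acceptance}, but the theorem asserts only which strategy is optimal in each regime (not by how much, nor strictly), and Proposition~\ref{prop:acceptance} is itself only proof-sketched, so the strict dominance of the context-matched portfolio over a uniform policy rests on premises the paper does not actually establish. Your proof is sound conditional on those premises being supplied; as written against this paper, it is more rigorous than the source but cannot be fully discharged from it.
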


\section{Research Methodology}
\label{sec:methodology}

This research employs a three-phase mixed methodology combining systematic literature review (SLR) with design science research (DSR). Figure~\ref{fig:research_design} presents an overview of our research design.

\begin{figure}[htbp]
\centering
\includegraphics[width=0.99\textwidth]{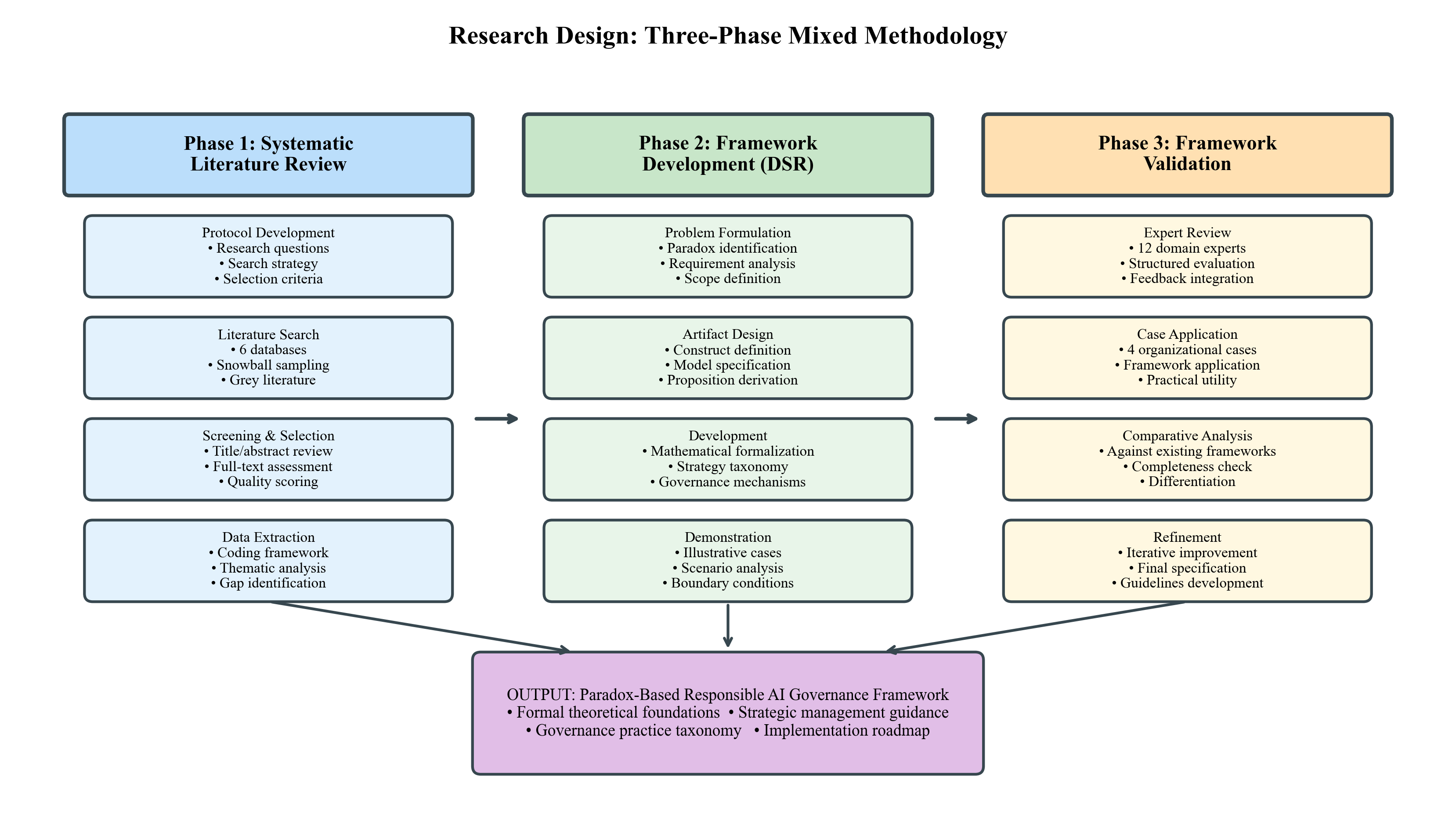}
\caption{Research Design: Three-Phase Mixed Methodology}
\label{fig:research_design}
\end{figure}

\subsection{Systematic Literature Review}

The SLR follows established guidelines \cite{kitchenham2007guidelines} and PRISMA 2020 standards \cite{page2021prisma}. We searched six databases (Web of Science, Scopus, ACM Digital Library, IEEE Xplore, AIS eLibrary, Google Scholar) using terms related to AI governance, responsible AI, and AI ethics. Inclusion criteria specified articles published 2018-2025 addressing AI governance with empirical evidence or theoretical contribution.

From 3,003 initial records, we identified 2,341 after deduplication. Title/abstract screening ($\kappa = 0.81$) and full-text assessment yielded 88 studies meeting quality thresholds ($\bar{Q} \geq 3.0$ on a 5-point scale across six criteria). Data extraction employed structured coding, with thematic analysis generating 247 first-order codes consolidated into 42 second-order themes and 6 theoretical dimensions. Theoretical saturation was achieved at approximately $n = 72$ articles.

\subsection{Framework Development}

Design science methodology guided framework development \cite{hevner2004design}. Following Peffers et al.'s \cite{peffers2007design} process model, we: (1) identified the problem (principles-to-practices gap), (2) defined objectives (a framework addressing paradoxical tensions), (3) designed and developed the PRAIG framework, (4) demonstrated application through case studies, and (5) evaluated through expert review.

Expert evaluation involved 12 specialists (4 academics, 4 practitioners, 4 policymakers) who assessed the framework on content validity, construct validity, internal consistency, practical utility, and comparative advantage, using 4-item subscales (7-point Likert). Results indicated strong support: content validity ($\bar{E} = 5.8$), construct validity ($\bar{E} = 5.6$), internal consistency ($\bar{E} = 6.1$), practical utility ($\bar{E} = 5.4$), and comparative advantage ($\bar{E} = 5.7$).

\section{Findings: The PRAIG Framework}
\label{sec:findings}

The Paradox-based Responsible AI Governance (PRAIG) framework comprises four components: taxonomies of benefits and risks, an integrated governance framework, and paradox management strategies.

\subsection{Taxonomy of Strategic AI Benefits}

Our synthesis identified three aggregate dimensions of AI benefits, as illustrated in Figure~\ref{fig:benefits}.

\begin{figure}[ht]
\centering
\includegraphics[width=0.95\textwidth]{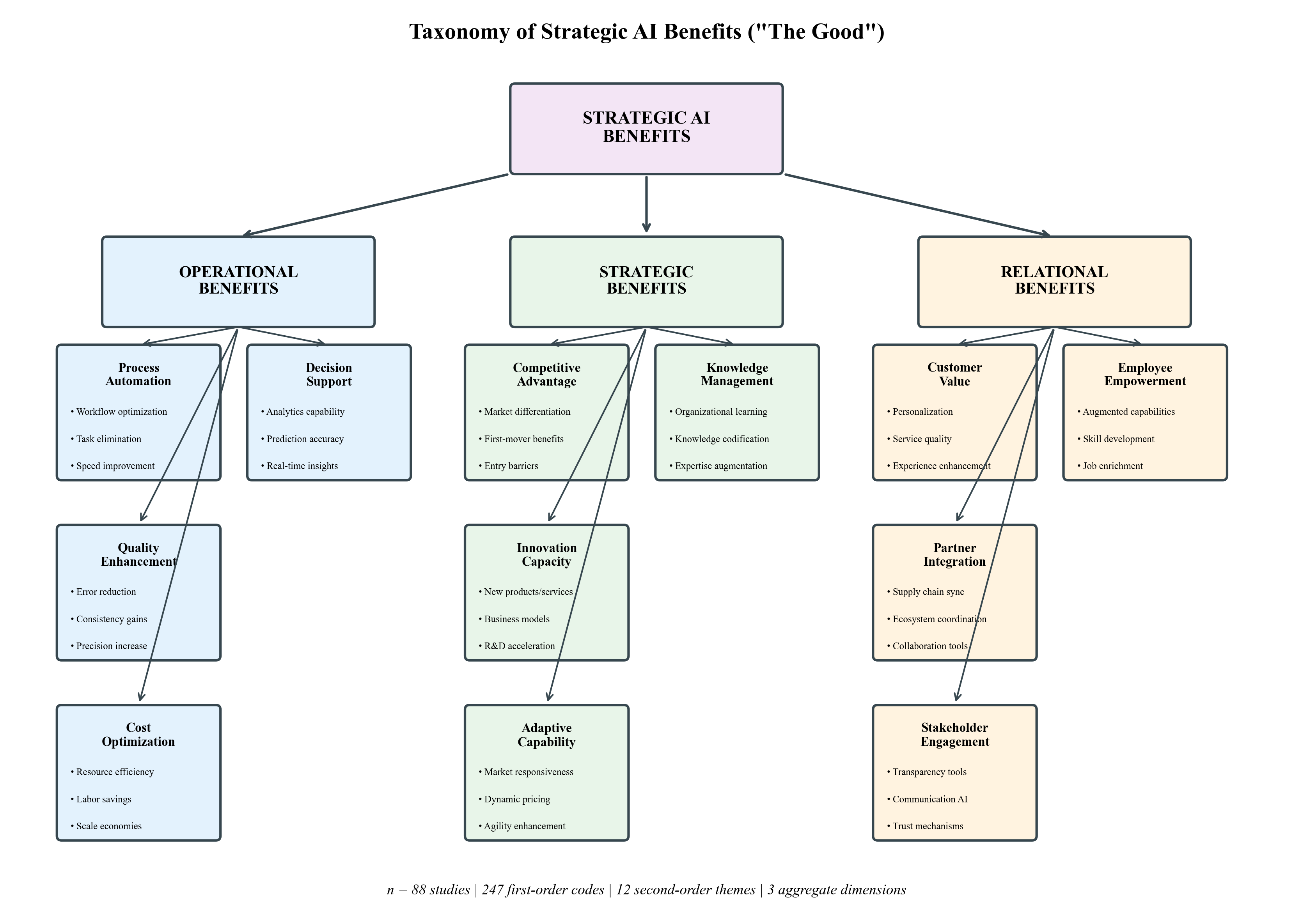}
\caption{Taxonomy of Strategic AI Benefits (``The Good'')}
\label{fig:benefits}
\end{figure}

\textit{Operational benefits} (38\% of codes) include process automation (15-40\% productivity gains), quality enhancement (25-50\% error reduction), cost optimization (10-30\% cost reduction), and decision support (20-35\% decision improvement).

\textit{Strategic benefits} (35\% of codes) encompass competitive advantage through differentiation and entry barriers, innovation capacity through accelerated R\&D and business model innovation, adaptive capability enabling agility in VUCA environments, and knowledge management transforming organizational learning.

\textit{Relational benefits} (27\% of codes) comprise customer value through personalization, partner integration through supply chain synchronization, stakeholder engagement through transparency tools, and employee empowerment through capability augmentation.

\subsection{Taxonomy of AI Risks}

Our analysis identified four aggregate dimensions of AI risks, as illustrated in Figure~\ref{fig:risks}.

\begin{figure}[ht]
\centering
\includegraphics[width=0.99\textwidth]{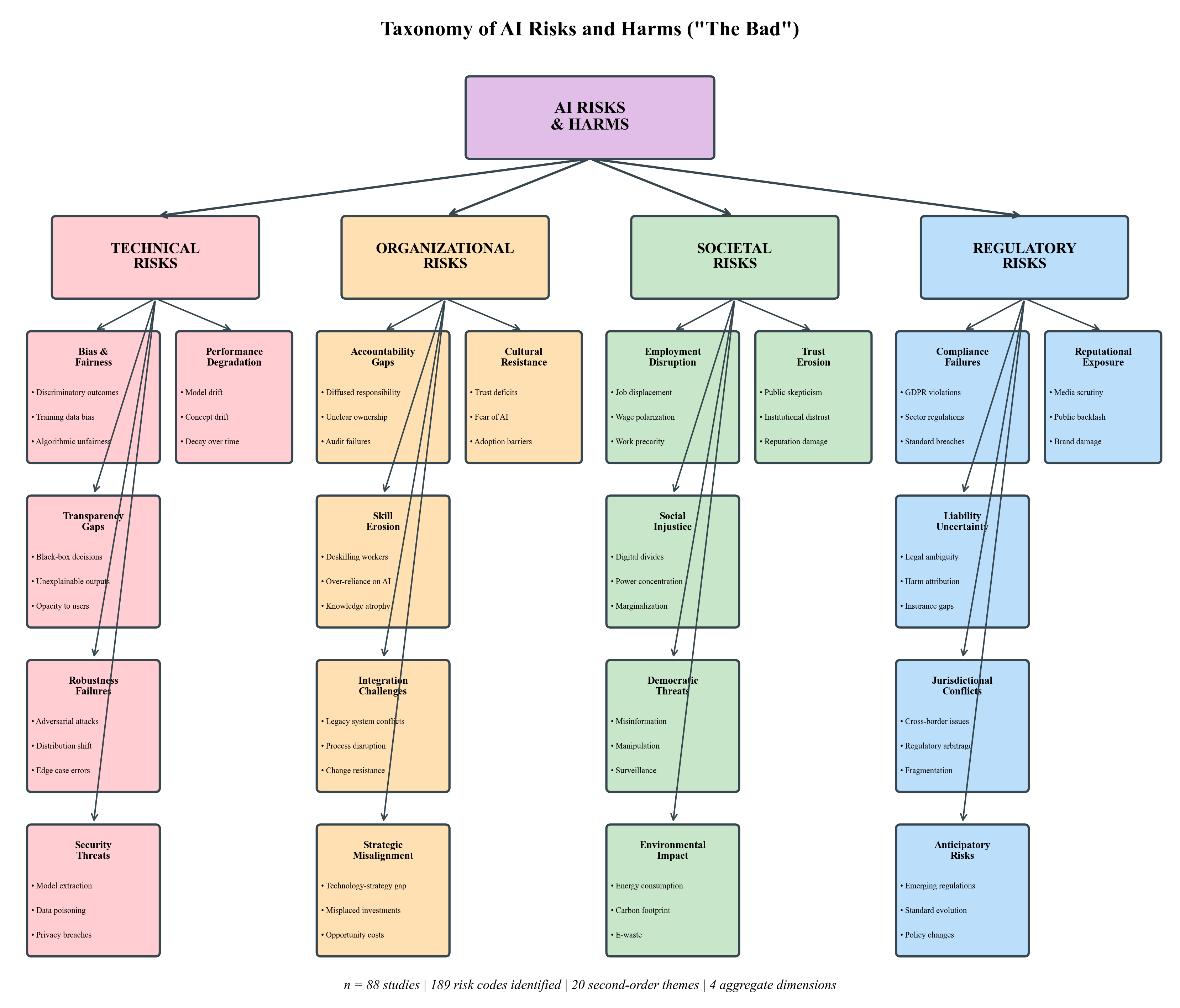}
\caption{Taxonomy of AI Risks and Harms (``The Bad'')}
\label{fig:risks}
\end{figure}

\textit{Technical risks} (32\% of codes) include bias and fairness violations, transparency gaps, robustness failures (adversarial attacks, distributional shift), security threats (model extraction, data poisoning), and performance degradation.

\textit{Organizational risks} (28\% of codes) encompass implementation failures, skill gaps, organizational resistance, strategic misalignment, and dependency risks.

\textit{Societal risks} (24\% of codes) comprise labor market disruption, inequality amplification, democratic processes erosion, environmental impacts, and power concentration.

\textit{Regulatory risks} (16\% of codes) include compliance failures, jurisdictional complexity, and liability uncertainty.

\subsection{Integrated Governance Framework}

The PRAIG framework links antecedents, practices, outcomes, and feedback dynamics, as illustrated in Figure~\ref{fig:framework}.

\begin{figure}[ht]
\centering
\includegraphics[width=0.99\textwidth]{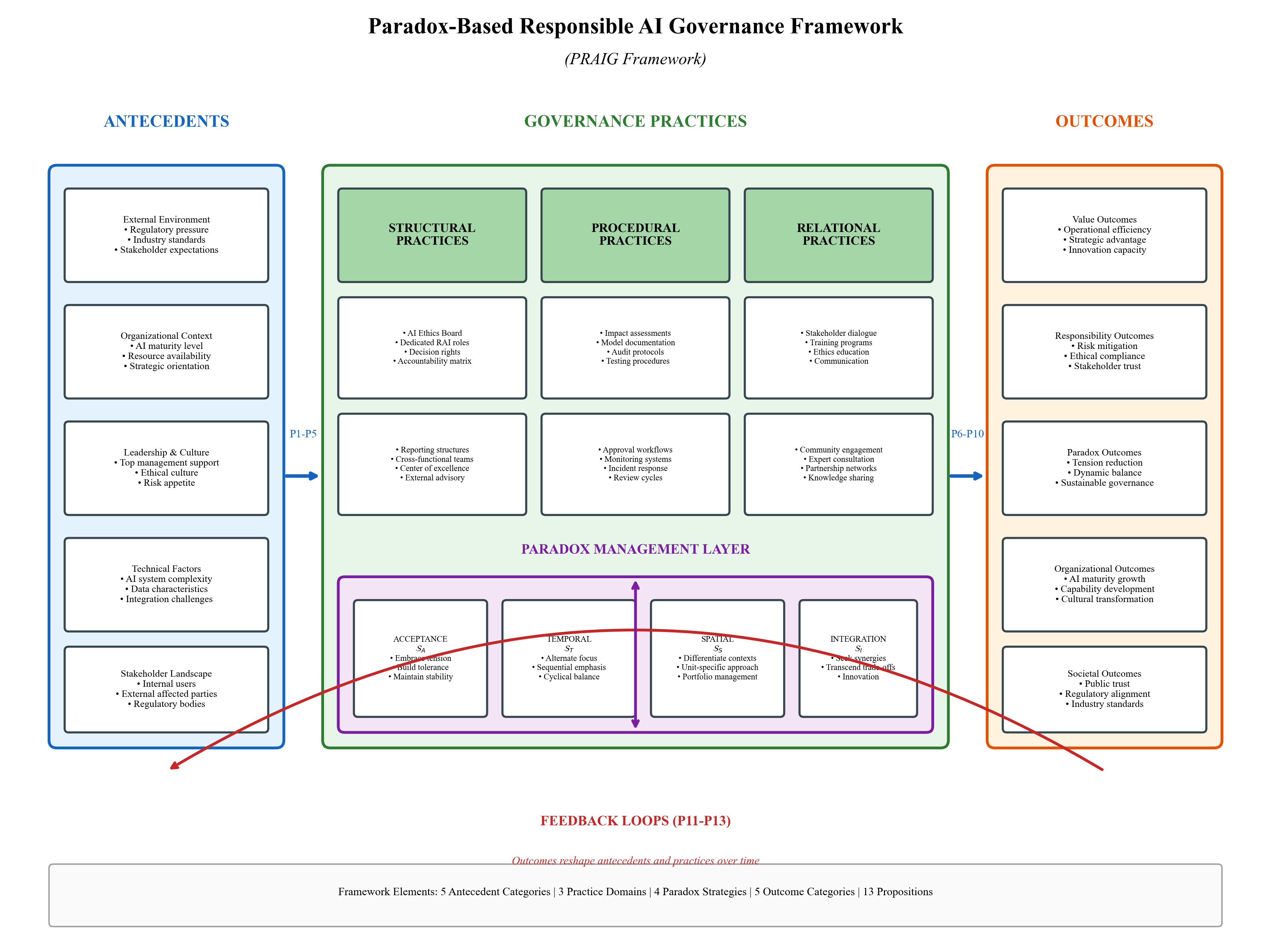}
\caption{The PRAIG Framework: Integrated Model of Responsible AI Governance}
\label{fig:framework}
\end{figure}

\subsubsection{Antecedents}

\textit{Organizational antecedents} include AI maturity level, organizational culture (risk tolerance, ethical orientation), dynamic capabilities, and resource availability. \textit{Environmental antecedents} encompass regulatory pressure, stakeholder expectations, competitive dynamics, and technology evolution rate.

\subsubsection{Governance Practices}

\textit{Structural practices} establish formal organizational elements: AI ethics boards, Chief AI Ethics Officer roles, dedicated governance units, and clear accountability structures.

\textit{Procedural practices} define governance processes: AI impact assessments, algorithmic auditing, documentation standards (model cards, datasheets), lifecycle governance, and incident response.

\textit{Relational practices} foster engagement and collaboration: stakeholder dialogue, training programs, ethics education, community engagement, and expert consultation.

\begin{proposition}[Governance Effectiveness]
\label{prop:governance_effectiveness}
Governance effectiveness is a function of structural ($G_S$), procedural ($G_P$), and relational ($G_R$) practice intensity with complementarity effects:
\begin{equation}
\text{Effectiveness} = G_S^{\alpha} \cdot G_P^{\beta} \cdot G_R^{\gamma} \cdot (1 + \delta \cdot G_S \cdot G_P \cdot G_R)
\label{eq:governance_effectiveness}
\end{equation}
The multiplicative structure implies that weakness in any domain limits overall effectiveness; the complementarity term reflects that practices reinforce each other.
\end{proposition}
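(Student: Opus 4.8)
The plan is to read Proposition~\ref{prop:governance_effectiveness} as a \emph{verification} claim rather than a derivation from first principles: I would fix the functional form in Equation~\eqref{eq:governance_effectiveness}, write $\mathcal{E}(G_S, G_P, G_R) = G_S^{\alpha} G_P^{\beta} G_R^{\gamma}\,(1 + \delta\, G_S G_P G_R)$ on the nonnegative orthant with structural parameters $\alpha, \beta, \gamma > 0$ and $\delta > 0$, and then establish that $\mathcal{E}$ exhibits exactly the two qualitative properties asserted: (a) that weakness in any single domain caps overall effectiveness, and (b) that the three domains are mutual complements, with the $\delta$-term encoding a genuine three-way synergy. Optionally I would precede this with a short axiomatic motivation, showing that a multiplicative baseline is essentially forced once one posits that each governance domain is \emph{necessary} (effectiveness vanishes if any domain is absent) together with log-separability of the baseline contribution, which reduces to a Cauchy/Pexider functional equation whose solution is the power-law (Cobb--Douglas) core.

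For property (a), the \emph{non-compensatory} structure, I would first record monotonicity, $\partial \mathcal{E}/\partial G_i > 0$ for each $i \in \{S,P,R\}$ on the strictly positive orthant, so that intensifying any single practice raises effectiveness. The decisive step is the limiting behaviour: since the exponents are positive and the bracket $1 + \delta G_S G_P G_R$ stays bounded and positive near the boundary, one has $\lim_{G_S \to 0^+} \mathcal{E} = 0$ for any fixed $G_P, G_R$, and symmetrically in the other two arguments. This is the formal content of ``weakness in any domain limits overall effectiveness'': no intensity of procedural and relational practice can compensate for absent structural practice, because the product collapses. I would contrast this with an additive aggregate $G_S + G_P + G_R$, which permits full substitution, to make clear that it is the multiplicative form that delivers the claim.

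For property (b), complementarity, I would compute the mixed second partials of the baseline $B = G_S^{\alpha} G_P^{\beta} G_R^{\gamma}$, obtaining for $i \neq j$ a positive quantity of the form $\partial^2 B / \partial G_i \partial G_j = (\text{exponent product})\, B/(G_i G_j) > 0$ (e.g.\ $\alpha\beta\, B/(G_S G_P)$), which is precisely Edgeworth complementarity, equivalently supermodularity, of the baseline: the marginal return to one practice rises with the levels of the others. To isolate the role of $\delta$, I would then evaluate the full third-order mixed partial $\partial^3 \mathcal{E} / \partial G_S \partial G_P \partial G_R$ and verify that its $\delta$-dependent contribution is strictly positive, so the interaction term injects a joint three-way synergy over and above the pairwise complementarity already present in the Cobb--Douglas core. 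Together these derivative signs formalise ``practices reinforce each other.''

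The main obstacle is conceptual rather than computational: the two properties I can verify do \emph{not} pin down the functional form uniquely. Monotonicity, necessity of each domain, and positive cross-partials are shared by an entire family of functions---for instance a constant-elasticity-of-substitution aggregator, or a fully multiplicative synergy $(1+\delta G_S)(1+\delta G_P)(1+\delta G_R)$---so the specific additive term $1 + \delta G_S G_P G_R$ is a modeling choice, not a forced consequence. Consequently the honest scope of the result is a \emph{consistency} statement: I would show that the stated form realises the claimed non-compensatory and complementarity properties, and I would flag explicitly that any stronger uniqueness assertion would require additional axioms---such as a constant-returns normalisation together with a prescribed elasticity for the three-way interaction---that the proposition as written does not supply.
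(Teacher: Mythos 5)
The paper offers no proof of Proposition~\ref{prop:governance_effectiveness} at all: the functional form in Equation~\eqref{eq:governance_effectiveness} is posited as a modeling assumption, and the two qualitative claims (non-compensability and complementarity) are asserted in a single sentence of prose with no supporting argument. Your proposal therefore does not diverge from the paper's proof so much as supply one where none exists, and what you supply is correct. The boundary-limit argument ($\mathcal{E} \to 0$ as any $G_i \to 0^+$ with positive exponents and a bounded bracket) is the right formalization of ``weakness in any domain limits overall effectiveness,'' and your cross-partial computations are accurate: $\partial^2 B/\partial G_i \partial G_j = (\text{exponent product})\, B/(G_i G_j) > 0$ for the Cobb--Douglas core, and the $\delta$-term contributes $\delta(\alpha+1)(\beta+1)(\gamma+1)\, G_S^{\alpha} G_P^{\beta} G_R^{\gamma} > 0$ to the third mixed partial, so the interaction term does encode a genuine three-way synergy beyond the pairwise Edgeworth complementarity already present. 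Your closing caveat is the most valuable part: the stated properties do not pin down this functional form uniquely (a CES aggregator or a product of the form $(1+\delta G_S)(1+\delta G_P)(1+\delta G_R)$ would satisfy them equally well), so the proposition is a consistency claim about a chosen specification, not a derivation. The paper itself never acknowledges this, and flagging it is a genuine improvement on what the paper provides; your optional Pexider-equation motivation for the multiplicative baseline would likewise add rigor the paper lacks, though it requires the log-separability axiom you correctly identify as an additional assumption.
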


\subsubsection{Outcomes and Feedback}

Effective governance produces value outcomes (deployment on solid foundations), responsibility outcomes (risk mitigation, ethical compliance), paradox outcomes (tension reduction, dynamic balance), organizational outcomes (capability development), and societal outcomes (public trust, regulatory alignment).

Three feedback loops drive governance evolution: a \textit{reinforcing loop} (R1) where positive outcomes build commitment and capability; a \textit{balancing loop} (B1) where negative outcomes trigger governance intensification; and a \textit{learning loop} (L1) where experience generates capability development.

\subsection{Paradox Management Implementation}

Figure~\ref{fig:strategies} illustrates the four paradox management strategies with their characteristics and optimal conditions.

\begin{figure}[ht]
\centering
\includegraphics[width=0.99\textwidth]{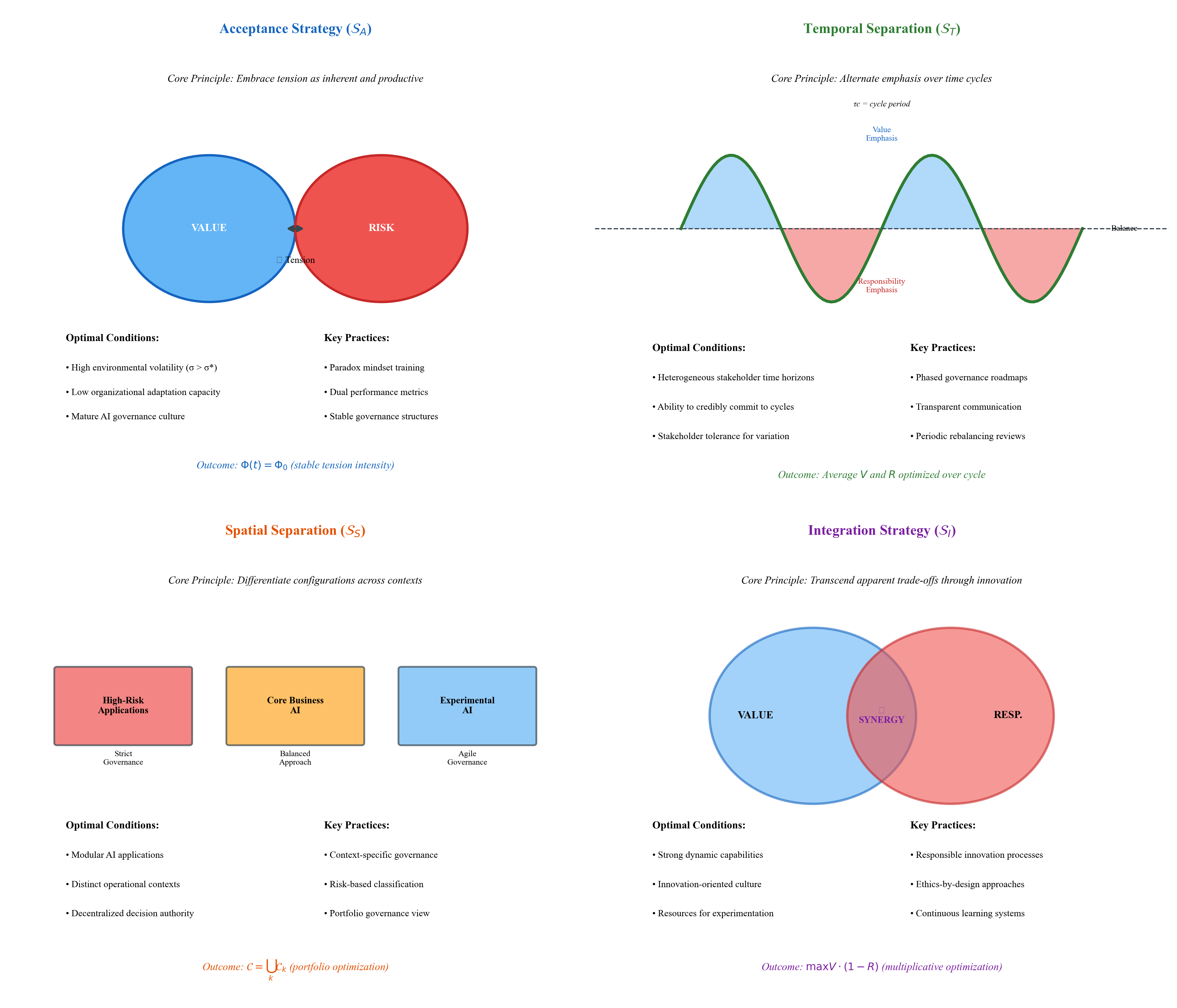}
\caption{Paradox Management Strategies: Characteristics and Conditions}
\label{fig:strategies}
\end{figure}

\subsubsection{Acceptance Strategy}

Acceptance involves acknowledging tension as inherent and productive. Key practices include paradox mindset development, dual performance metrics, leadership modeling, and dialogue spaces. Optimal conditions include high environmental volatility, limited adaptation capacity, and sufficient organizational maturity to tolerate ambiguity.

\subsubsection{Temporal Separation}

Temporal separation alternates emphasis over time cycles. Under cycle period $\tau_c$, governance emphasis oscillates: $\epsilon(t) = \frac{1}{2}[1 + \cos(2\pi t/\tau_c)]$. Key practices include phased roadmaps, transparent communication, periodic rebalancing, and stakeholder expectation management. Optimal when stakeholders have heterogeneous time horizons.

\subsubsection{Spatial Separation}

Spatial separation differentiates configurations across contexts. The organization maintains a portfolio $\{\mathcal{C}_k\}_{k=1}^{K}$ enabling high-value, higher-risk configurations in some contexts balanced by conservative configurations elsewhere. Table~\ref{tab:spatial_separation} illustrates application across risk categories aligned with the EU AI Act.

\begin{table}[htbp]
\centering
\caption{Spatial Separation: Risk-Based Governance Differentiation}
\label{tab:spatial_separation}
\small
\begin{tabular}{p{2.2cm}p{3cm}p{3cm}p{4cm}}
\hline
\textbf{Risk Category} & \textbf{Example Applications} & \textbf{Governance Intensity} & \textbf{Key Mechanisms} \\
\hline
Unacceptable Risk & Social scoring; Real-time biometric surveillance & Prohibited & Pre-deployment prohibition; Strict enforcement \\
\hline
High Risk & Credit decisions; Employment screening; Medical diagnosis & Maximum & Mandatory conformity assessment; Human oversight; Continuous monitoring \\
\hline
Limited Risk & Chatbots; Emotion recognition; Content recommendation & Moderate & Transparency requirements; User notification; Opt-out mechanisms \\
\hline
Minimal Risk & Spam filters; Inventory optimization; Game AI & Baseline & Voluntary codes; Internal review; Best practices \\
\hline
\end{tabular}
\end{table}

\subsubsection{Integration Strategy}

Integration synthesizes opposing elements through innovation. The organization solves:
\begin{equation}
\mathcal{C}^* = \arg\max_{\mathcal{C}} V(\mathcal{C}) \quad \text{s.t.} \quad R(\mathcal{C}) \leq \bar{R}, \quad \mathcal{C} \in \mathcal{F}(\mathcal{C}_V \cup \mathcal{C}_R)
\label{eq:integration}
\end{equation}
where $\mathcal{F}$ allows novel configurations combining elements from value-maximizing and risk-minimizing configurations. Key practices include responsible-AI-by-design, governance innovation, and synergy exploitation. Optimal when the organization possesses high dynamic capabilities.

\section{Discussion}
\label{sec:discussion}

\subsection{Theoretical Contributions}

Our research makes three primary contributions, summarized in Table~\ref{tab:theoretical_contributions}.

\begin{table}[htbp]
\centering
\caption{Summary of Theoretical Contributions}
\label{tab:theoretical_contributions}
\small
\begin{tabular}{p{3.2cm}p{4cm}p{2.8cm}p{3cm}}
\hline
\textbf{Contribution} & \textbf{Core Insight} & \textbf{Extends} & \textbf{Challenges} \\
\hline
Paradox reconceptualization & Value-responsibility relationship is paradoxical, not trade-off & Paradox theory to AI domain & Trade-off assumptions in AI governance \\
\hline
Strategy taxonomy & Four formalized strategies with contingency conditions & Domain-specific paradox strategies & One-size-fits-all governance approaches \\
\hline
PRAIG framework & Integrated model with feedback dynamics & Static governance frameworks & Principle-focused approaches lacking implementation theory \\
\hline
\end{tabular}
\end{table}

\textit{First}, we reconceptualize responsible AI governance as paradox management rather than trade-off optimization. The value-responsibility relationship exhibits contradiction with interdependence, persistence over time, and dynamic amplification under trade-off logic. Proposition~\ref{prop:amplification} explains the principles-to-practices gap as a predictable consequence of applying inappropriate logic to paradoxical situations.

\textit{Second}, we develop a taxonomy of paradox management strategies adapted to AI governance. We provide formal specification (mathematical characterization), contextual adaptation (AI-specific practices), and contingency conditions (factors determining effectiveness). The portfolio perspective (Corollary~\ref{cor:portfolio}) recognizes that effective governance requires deploying different strategies across contexts.

\textit{Third}, the PRAIG framework provides an integrated account linking antecedents, practices, and outcomes. Unlike static frameworks specifying requirements, PRAIG explicitly models feedback dynamics that explain governance evolution. The complementarity specification (Proposition~\ref{prop:governance_effectiveness}) advances understanding of how structural, procedural, and relational practices interact.

\subsection{Practical Implications}

For executives, our findings suggest: (1) \textit{Embrace paradox} rather than seeking resolution---develop organizational capacity for ongoing paradox management; (2) \textit{Match strategies to context}---assess conditions, classify applications, allocate appropriate strategies, and continuously adjust; (3) \textit{Invest in governance capability} as a strategic asset enabling aggressive yet responsible AI deployment; (4) \textit{Leverage feedback loops}---amplify reinforcing loops, learn from balancing loops, and strengthen learning loops.

For practitioners, implications include: designing for paradox rather than against it, embedding governance considerations from inception, and supporting spatial separation through modular design.

For policymakers, our analysis suggests regulatory approaches should emphasize governance capabilities and processes rather than static compliance targets, recognizing the dynamic and paradoxical nature of organizational challenges.

\subsection{Limitations}

We acknowledge limitations. The framework is primarily conceptual; while expert evaluation and case applications provide preliminary validation, large-scale empirical testing remains needed. Strategy contingencies are theoretically derived rather than empirically calibrated. The mathematical formalization, while enhancing precision, may limit accessibility for some practitioners. Boundary conditions include organizational size (sufficient scale needed), AI centrality (framework most applicable when AI is strategically important), and regulatory environment (some external pressure assumed).

\section{Research Agenda}
\label{sec:agenda}

The PRAIG framework motivates a structured research agenda across four themes, summarized in Table~\ref{tab:research_agenda}.

\begin{table}[htbp]
\centering
\caption{Research Agenda Overview}
\label{tab:research_agenda}
\small
\begin{tabular}{p{2.8cm}p{5.2cm}p{3.2cm}c}
\hline
\textbf{Theme} & \textbf{Research Questions} & \textbf{Methods} & \textbf{Priority} \\
\hline
Paradox Dynamics & RQ1: Tension measurement; RQ2: Amplification mechanisms; RQ3: Temporal evolution & Longitudinal surveys; Process studies; Simulation & High \\
\hline
Strategy Effectiveness & RQ4: Contingency validation; RQ5: Strategy combinations; RQ6: Transition dynamics & Field experiments; Comparative cases; Panel studies & High \\
\hline
Governance Mechanisms & RQ7: Practice complementarity; RQ8: Feedback loop dynamics & Multi-level analysis; System dynamics; Action research & Medium \\
\hline
Contextual Variation & RQ9: Cross-cultural differences; RQ10: Technology-specific governance & Cross-national surveys; Comparative cases; Design science & Medium \\
\hline
\end{tabular}
\end{table}

\textit{Theme 1: Paradox Dynamics.} RQ1 addresses how paradoxical tension can be measured and tracked. RQ2 investigates mechanisms driving tension amplification. RQ3 examines how tensions evolve over the governance lifecycle.

\textit{Theme 2: Strategy Effectiveness.} RQ4 seeks empirical validation of contingency conditions in Theorem~\ref{thm:optimal}. RQ5 investigates how strategies combine in portfolios. RQ6 examines organizational transitions between strategies.

\textit{Theme 3: Governance Mechanisms.} RQ7 tests the complementarity specification in Proposition~\ref{prop:governance_effectiveness}. RQ8 investigates how feedback loops shape governance evolution.

\textit{Theme 4: Contextual Variation.} RQ9 examines cross-cultural differences in responsible AI governance. RQ10 addresses governance challenges arising from specific AI technologies (generative AI, autonomous systems, foundation models).

\section{Conclusion}
\label{sec:conclusion}

This paper has addressed the fundamental challenge of how organizations can realize AI's strategic benefits while fulfilling responsibilities to stakeholders and society. Our analysis reveals that this challenge is not merely practical but theoretically significant, requiring reconceptualization.

The central insight is that the relationship between AI value creation and responsible deployment is paradoxical rather than merely conflictual. Unlike trade-offs amenable to optimization, paradoxes involve contradictory yet interdependent elements that persist over time. Our formal analysis demonstrates that applying trade-off logic to this paradoxical situation amplifies rather than resolves tensions, explaining the persistent principles-to-practices gap.

This reconceptualization yields the PRAIG framework, which provides: (1) a theoretically grounded explanation for governance challenges through the dynamics of tension amplification; (2) a taxonomy of paradox management strategies with formally specified contingencies; and (3) an integrated model linking antecedents, practices, outcomes, and feedback dynamics.

For executives, our findings suggest a fundamental shift: from seeking optimal balance to building organizational capacity for ongoing paradox management. This implies investing in paradox mindset development, designing governance systems that institutionalize dual objectives, and leveraging feedback loops for continuous adaptation.

As AI becomes increasingly embedded in organizational processes and societal systems, organizations that develop sophisticated capabilities for managing the value-responsibility paradox will capture AI's benefits while avoiding harms. The path forward requires abandoning the premise that responsible AI governance is primarily about finding the right balance. Instead, it demands embracing the inherent paradox---recognizing that value and responsibility are interdependent, that tensions must be managed rather than resolved, and that effective governance emerges from ongoing engagement with contradiction.

The good, the bad, and the AI are not separate phenomena but intertwined dimensions of a single complex challenge. Meeting this challenge is among the defining tasks of our technological age.

\section*{Acknowledgement}
Authors would like to thank 3S Holding O\"U for supporting this work financially. Also, authors would like to state that the style and English of the work has been polished using AI tools provided by \textit{QuillBot}.

\bibliographystyle{IEEEtran}
\bibliography{ref}

\end{document}